\numberwithin{equation}{section}
\newtheorem{theorem}{Theorem}
\newtheorem{corollary}{Corollary}
\newtheorem{assumption}{Assumption}
\newacronym{ENKF}{enkf}{ensemble Kalman filter}
\newacronym{EKF}{ekf}{extended Kalman filter}
\newacronym{UKF}{ukf}{unscented Kalman filter}
\newacronym{SSM}{ssm}{state-space model}
\newacronym{RNN}{rnn}{recurrent neural net}
\DeclareMathOperator*{\argmin}{\arg\!\min}
\begin{document}

\twocolumn[

\aistatstitle{Recurrent Neural Networks and Universal Approximation of Bayesian Filters}

\aistatsauthor{ Adrian N. Bishop \And Edwin V. Bonilla}

\aistatsaddress{ University of Technology Sydney \And  CSIRO's Data61 } ]

\begin{abstract}
We consider the Bayesian optimal filtering problem: i.e. estimating some conditional statistics of a latent time-series signal from an observation sequence. Classical approaches often rely on the use of assumed or estimated transition and observation models. Instead, we formulate a generic recurrent neural network framework and seek to learn directly a recursive mapping from observational inputs to the desired estimator statistics. The main focus of this article is the approximation capabilities of this framework. We provide approximation error bounds for filtering in general non-compact domains. We also consider strong time-uniform approximation error bounds that guarantee good long-time performance. We discuss and illustrate a number of practical concerns and implications of these results.
\end{abstract}

\section{INTRODUCTION}
\label{sec:introduction}

Optimal filtering \citep{BainCrisan2008a} is concerned with estimating some statistics of a latent random signal (or state) ${X}_t$ at the current time $t\in\mathbb{N}$, conditioned on some observations ${Y}_\tau$ collected thus far, i.e., $0\leq \tau\leq t$. When the signal transition and observation models are linear with additive Gaussian noise, the solution is given by the celebrated Kalman filter \citep{Kalman1960a}. In general nonlinear, non-Gaussian, settings, there is no tractable finite-dimensional optimal filter and approximations are needed. Nonlinearity and high state-dimensionality, typical in applications, make the filtering problem challenging. 

We consider a data-driven approach to learning an optimal estimator. More specifically, we consider scenarios in which latent signal and observation data may be collected by computer simulations (if a good model is available), actual experiments (otherwise) or some combination thereof. We provide a framework for optimal filtering using general \gls{RNN} structures that may be trained by minimizing the empirical mean-square error between the sampled signal data and the network output, with the network taking as input only the current observation at each time. The output of the \gls{RNN} acts as a recursive state estimator. We study a number of interesting approximation capabilities of this estimator with respect to the true optimal nonlinear filter (e.g. the true conditional expectations).

\textbf{Summary of contributions}: We propose a generic \gls{RNN}-based architecture and methodology for optimal (Bayesian) filtering in general state-space models. This formulation is kept generic so as to facilitate a study of the approximation capabilities and limitations of \gls{RNN}s in applications to Bayesian filtering. Firstly: We show that a generic \gls{RNN}-based estimator can approximate the optimal estimate of the signal to any desired accuracy on a fixed finite time interval of interest. We note that almost no assumptions on the state-space model are needed in this case. Furthermore, the model may not even be known if experimental data is available to generate signal training data. However, the approximation capability in this setting relies strictly on the assumption of a finite horizon of interest. Secondly: We study the time-uniform universal approximation of Bayesian filters with \gls{RNN}s. For a particular class of models, we show the approximation error can be bounded, to any desired accuracy, uniformly \textit{for all time}; i.e. implying that approximation errors do not accumulate over time. This result has important practical implications, e.g. it may influence network design, and it allows one to train on a signal-observation sequence of (short) finite length while permitting the filtering algorithm to run indefinitely (as is typical in applications). Thirdly: We discuss and illustrate a number of practical consequences of both results and contrast these with each other and with other methodologies.

\label{sec:related-work}
\textbf{Background and related work in filtering}: 
Optimal filtering and related problems in learning and inference in dynamical systems are of interest across many fields of study, including control and signal processing \citep{AndersonMoore1979a,BainCrisan2008a}, geophysics \cite{Evensen2009a}, machine learning \citep{van2000unscented,ghahramani1995factorial}, and statistics \citep{PittShephard1999a,andrieu2010particle}. A comprehensive review of the methodology in this area is beyond the scope of this paper. We simply note in passing some popular model-based approximations for nonlinear filtering such as the \gls{EKF} \citep{AndersonMoore1979a} and the \gls{UKF} \citep{julier-2004}, and Monte Carlo integration methods for filtering, termed particle filters \citep{GordonSalmondSmith1993a,PittShephard1999a}, and \gls{ENKF} \citep{Evensen2009a} methods. There are some adaptions of these model-based methods to data-driven, likelihood-free, filtering, e.g., employing approximate Bayesian computation within Monte Carlo \citep{JasraSinghMartinEtAl2010a,MartinMcCabeFrazierEtAl2019a} or using Gaussian processes \citep{KoFox2009a}. 

Neural network approaches to optimal nonlinear filtering were considered in early work in \cite{Lo1994a,ParisiniZoppoli1994a,V.T.Shin1994a,AlessandriBagliettoParisiniEtAl1999a,ParlosMenonAtiya2001a}. The article \cite{HaykinYeeDerbez1997a} provides a good early summary and survey. These works naturally considered very simple networks, e.g. one layer and just a handful of sigmoidal-type activation functions; e.g. see \cite{Lo1994a}. However, convergence and approximation error results were also given in \cite{Lo1994a, ParisiniZoppoli1994a} and also in \cite{ParisiniAlessandriMaggioreEtAl1997a,AlessandriBagliettoParisiniEtAl1999a}. We generalize, strengthen, and add to these early results in a number of ways in this work; e.g. in particular with our main result on time-uniform approximation bounds with observations on non-compact domains.

Other work has used machine learning to improve the optimization of calculating the initial state in a variational data assimilation framework \citep{frerix2021variational} and inference in state-space models \citep[see, e.g.,][Ch.~18]{murphy2012machine}. Some approaches have developed approximate inference techniques in parametric settings \citep{ghahramani2000variational,fox2008nonparametric}. Later methods have used non-parametric models such as Gaussian processes \citep{frigola2014variational,doerr2018probabilistic,nickisch2018state,pmlr-v97-ialongo19a} or flexible modern neural network-based frameworks \citep{KrishnanShalitSontag2017a,gu2015neural,KarlSoelchBayerEtAl2016a,haarnoja2016backprop,becker2019recurrent}. Finally, we note that learning time-series models with neural networks is closely related to the filtering discussed here; e.g., it may be considered a special case in which the observed data is just the signal process with no noise, see, e.g., \cite{KarlSoelchBayerEtAl2016a,RangapuramSeegerGasthausEtAl2018a}. 

The purpose of this work is not the development of new (\gls{RNN}-based) methodology for Bayesian filtering. Instead, we study the universal approximation capability of (rather generic) \gls{RNN}-based approximations of the optimal filter, both on finite time intervals and uniformly in time. In later sections, when we can easily reference specific technical details and conditions, we discuss the results of this article, and (briefly) contrast these with other approximation methods as in \cite{HeineCrisan2008a,Handel2009c,Whiteley2013a,DoucMoulinesOlsson2014a,CrisanLopezYelaMiguez2020a}.

\section{DISCRETE-TIME BAYESIAN FILTERING}

Let $\mathbb{X}\subseteq\mathbb{R}^{d_x}$ and $\mathbb{Y}\subseteq\mathbb{R}^{d_y}$ with the Borel $\sigma$-algebra $\mathcal{B}(\mathbb{X})$. Consider a Markov chain $(X_t)_{t\in\mathbb{N}}$ taking values in $\mathbb{X}$ with Markov kernel $\mathsf{K} :\mathbb{X} \times \mathcal{B}(\mathbb{X})\rightarrow [0,1]$. Consider a process $(Y_t)_{t\in\mathbb{N}}$ defined on $\mathbb{Y}$, conditionally independent given $(X_t)_{t\in\mathbb{N}}$, with a transition density $g:\mathbb{X}\times \mathbb{Y}\rightarrow \mathbb{R}$ with respect to the Lebesgue measure. The process $X_t$ is thought of as being observed via the process $Y_t$. The process $(Y_t)_{t\in\mathbb{N}}$ is itself not a Markov chain, but the pair $(X_t,Y_t)_{t\in\mathbb{N}}$ is a Markov chain on $\mathbb{X}\times \mathbb{Y}$. Denote\footnote{Superscripts often denote initial conditions for random processes, but are also indices (e.g. over data, parameters, etc), and also just powers in some places. The use and case should be clear.} by $\mathbf{P}^\mu$ the law of $(X_t,Y_t)_{t\in\mathbb{N}}$ under which the pair $(X_t,Y_t)_{t\in\mathbb{N}}$ is a Markov chain on $\mathbb{X}\times \mathbb{Y}$ with $X_0$ having measure $\mu\in\mathcal{P}(\mathbb{X})$. The space of all probability measures on $\mathbb{X}$ is $\mathcal{P}(\mathbb{X})$. Expectation with respect to $\mathbf{P}^\mu$ is denoted by $\mathbf{E}^\mu$.

Filtering involves computing the regular conditional distribution $\pi_t^\mu(A) := \mathbf{P}^\mu(X_t\in\,A\,|\,Y_1,\ldots,Y_t)$, for all $A\in\mathcal{B}(\mathbb{X})$. The distribution $\pi_t^\mu\in\mathcal{P}(\mathbb{X})$ is called the filtering distribution. From Bayes' rule,
\begin{align}
\pi_0^{\mu} \,&=\, \mu,\qquad\nonumber\\
\pi^{\mu}_t \,&=\, \Psi(\pi^{\mu}_{t-1},Y_t) \label{truebayesfilter}
\end{align}
where, for some $\nu\in\mathcal{P}(\mathbb{X})$ and $y\in\mathbb{Y}$, the function $\Psi:\mathcal{P}(\mathbb{X})\times\mathbb{Y}\rightarrow\mathcal{P}(\mathbb{X})$ is defined by,
\begin{equation}
\Psi(\nu,y)(A) :=\frac{\int_A \, g(x,y) \,(\nu\mathsf{K})(dx) }{\int  g(x,y) \,(\nu\mathsf{K})(dx)},~~ \forall \,A\in\mathcal{B}(\mathbb{X})
\end{equation}
where $(\nu\mathsf{K})(dx):=\int\mathsf{K}(x',dx)\,\nu(dx')$. The mapping $\Psi$ is time-invariant. The process $(\pi_t^\mu)_{t\in\mathbb{N}}$ on $\mathcal{P}(\mathbb{X})$ is a Markov process when $X_0$ has measure $\mu\in\mathcal{P}(\mathbb{X})$ \citep{Stettner1989a}.

For a finite-dimensional integrable function $x\mapsto\rho(x)$, $x\in\mathbb{X}$, taking values in some Euclidean space, the filtering problem is often stated in terms of a point-valued estimate,
\begin{equation}
	\pi_t^\mu(\rho) := \int\rho(x)\pi_t^\mu(dx) = \mathbf{E}^\mu[ \rho(X_t) \,|\, Y_1,\ldots,Y_t] \label{filteringpointestimate}
\end{equation}
For example, one may want to estimate just the conditional mean and covariance of $X_t$, if they exist.

\subsection{Bayesian Filtering as Optimal State Estimation}

A closely related problem to that of filtering is state estimation. Going forward, we assume $(X_t,Y_t)$ are jointly square integrable and any measurable function preserves square integrability, e.g. $\mathbb{E}[\|\rho(X_{t})  \|^2]<\infty$. Then, consider the measurable function $\overline{\rho}^\mu_{t}:=\overline{\rho}^\mu_{t}(Y_1,\ldots,Y_t)$; being the solution of the optimization problem,
\begin{equation}
	\overline{\rho}^\mu_{t} ~:=~ \argmin_{f}~\mathbf{E}^\mu\left[ \| f(Y_1,\ldots,Y_t) -\rho(X_{t})  \|^2 \right]\label{optimalcriterion}
\end{equation}
where $f$ is any measurable function of $(Y_1,\ldots,Y_t)$. The solution to (\ref{optimalcriterion}) is related to the optimal filtering problem by $\overline{\rho}^\mu_{t} = \pi_t^\mu(\rho)$, as defined in (\ref{filteringpointestimate}). Note $\overline{\rho}^\mu_{t}$ is square integrable (as it may be viewed as the orthogonal projection onto the closed subspace of square integrable functions). 

We think of $\overline{\rho}^\mu_{t}$ as the optimal estimate of the state $\rho(X_{t})$. This estimate is intractable in all but the most pathological situations. It is desirable in practice to design an approximation of $\overline{\rho}^\mu_{t}$. We denote a computable estimate by $\widehat{\rho}^\nu_{t}$ where $\nu\in\mathcal{P}(\mathbb{X})$ is a known, assumed, or estimated distribution of ${X}_0$ such that $\widehat{\rho}^\nu_{0}:=\nu(\rho)$ is also computable.

We emphasise here that the solution to (\ref{optimalcriterion}), as given by $\overline{\rho}^\mu_{t} = \pi_t^\mu(\rho)$, see (\ref{filteringpointestimate}), is not generally given in the form of a recursion, even though the recursion (\ref{truebayesfilter}) on the level of the conditional distribution exists.

To close this section, note $\pi_t^\mu(\rho)$ makes sense in more general settings than outlined for $(X_t,Y_t)$ thus far (e.g. beyond independent, time-invariant, Markov settings); as does the solution to (\ref{optimalcriterion}). Recursions of the form (\ref{truebayesfilter}) also hold under more general models, see \cite{TongHandel2012a}.

\subsection{Sufficient Coordinates and Recursive Filtering}

The recursion in (\ref{truebayesfilter}) is on the level of probability measures and is generally infinite dimensional and intractable. Informally, e.g. neglecting existence and other structure, we may define a (generally infinite dimensional) vector state of sufficient statistics on a (separable) Banach space $\mathbb{S}$. For the filtering distribution $\pi_t^\mu\in\mathcal{P}(\mathbb{X})$ we may consider, for example, the conditional moments,
\begin{equation}
	S^\mu_t := \int  \begin{bsmallmatrix} x \\ \mathrm{vec}(xx^\top)\\ \vdots\end{bsmallmatrix}\,\pi_t^\mu(dx)
\end{equation}
as in \cite{Rudenko2010a}. A recursive Bayes filter may then take the form,
\begin{align}
	S^\mu_t \,=\,{\Psi}({S}^\mu_{t-1},Y_{t})  \,=:\,{\Psi}_t({S}^\mu_{t-1}) \label{suffstat-filter}
\end{align}
on $\mathbb{S}$ for some measurable function ${\Psi}:\mathbb{S}\times\mathbb{Y}\rightarrow\mathbb{S}$. The subscript $t$ in ${\Psi}_t$ succinctly indexes the observational input; highlighting the recursion on $\mathbb{S}$. In this notation, we write the composition $S^\mu_t ={\Psi}_t{\Psi}_{t-1}\cdots{\Psi}_{\tau+1}({S}^\mu_{\tau})$ for $\tau<t\in\mathbb{N}$. If such a representation exists, then ${\Psi}$ is time-invariant. 

We may claim that a finite-dimensional recursive optimal filter exists if and only if $\mathbb{S}$ is finite dimensional and we can transform between $S^\mu_t$ and $\pi_t^\mu$ in some standard way. In the linear-Gaussian setting a finite-dimensional state $S^\mu_t$ and recursion (\ref{suffstat-filter}) exists from which $\pi_t^\mu$ follows \citep{AndersonMoore1979a}. See also \cite{Sawitzki1981a, DickinsonSontag1985a, LevinePignie1986a}. If $S^\mu_t $ is finite, then it is a special case of $\pi_t^\mu(\rho)$ for some $x\mapsto\rho(x)$.

\section{GENERIC RNN-BASED STATE ESTIMATORS}

We introduce a generic \gls{RNN} architecture for state estimation. Consider a network with $L\in\mathbb{N}$ layers, in the form,
\begin{align}
	\widehat{\rho}^\nu_t &= \mathbf{W}_{(L,L-1)}\mathbf{s}_{t}^{(L-1)} + \mathbf{b}_{(L)} \label{generalRNN1} \\
	\mathbf{s}_{t}^{(l)} &= \sigma\left(\mathbf{W}_{(l,l-1)}\mathbf{s}_{t}^{(l-1)} + \mathbf{b}_{(l)} + \textstyle{\sum_{k=l}^{L-1}}\,\mathbf{W}_{(l,k)}\mathbf{s}_{t-1}^{(k)}\right),\nonumber\\
	&\qquad l\in\{1,\ldots,L-1\} \label{generalRNN2} \\
	\mathbf{s}_{t}^{(0)} &:= Y_t \label{generalRNN3}
\end{align}
where $\sigma(\cdot)$ is a vector-valued activation function acting component-wise on its argument. Component-wise, i.e. on scalar inputs, we assume $\sigma(\cdot):=\max\{0,\cdot\}$ here, i.e. we consider only rectified linear activation units (\textsc{ReLU}s). The parameter $\theta := [(\mathbf{W}_{(l,l-1)}, \mathbf{W}_{(l,k)}), \mathbf{b}_{(l)}]$ consists of appropriately sized real matrices and vectors. The generic notation for the \gls{RNN}-approximated state estimator is,
\begin{equation}
	\widehat{\rho}^\nu_t \,=\, \widehat{\Psi}^\theta(\mathbf{s}_{t-1}, Y_t) \,:=\, \widehat{\Psi}^\theta_t(\mathbf{s}_{t-1}) \label{RNNsuccinct} 
\end{equation}
where $\mathbf{s}_t$ is a stacked vector of the $\mathbf{s}_{t}^{(l)}$. The parameter $\theta$ is time-invariant and takes values in a high-dimensional Euclidean space. The recursion (\ref{RNNsuccinct}) is on the level of the internal network state $\mathbf{s}_{t-1}$ which we suggest captures in some sense a finite-dimensional approximation of sufficient statistics as in (\ref{suffstat-filter}), enough to approximate $\pi_t^\mu(\rho)$. 

\begin{assumption}\label{assump:basic}
	The process $(X_t,Y_t)_{t\in\mathbb{N}}$ is jointly square integrable and $x\mapsto\rho(x)$, and $y\mapsto\widehat{\Psi}^\theta(\cdot,y)$, are finite-dimensional, square integrable, and take values in some Euclidean space. The assumption on $\widehat{\Psi}^\theta$ is provable.
\end{assumption}

Common specialisations of  (\ref{generalRNN1}), (\ref{generalRNN2}), (\ref{generalRNN3}) involve feedback solely from the last activation layer to the first layer; and feedback from the output of each layer to itself, see \cite{PascanuGulcehreChoEtAl2014a}. Our results are proven under common simplifications of  (\ref{generalRNN1}), (\ref{generalRNN2}), (\ref{generalRNN3}).

The ideal \gls{RNN}-approximated estimator is based on optimizing $\theta$ according to the following cost functional,
\begin{equation}
	\mathcal{C}(\theta) \,:=\, \frac{1}{T} \, \sum_{t=1}^T \, \mathbf{E}^\mu\left[ \|\widehat{\Psi}^\theta(\mathbf{s}_{t-1}, Y_t) -\rho(X_{t}) \|^2 \right] \label{optimalcriterionneural}
\end{equation}
on a finite horizon $T\in\mathbb{N}$. In practice, suppose some data $\mathfrak{D}_{T,N}:= \smash{(X_t^{(n)},Y_t^{(n)})}$, $t\in\{1,\ldots,T\}$, $n\in\{1,\ldots,N\}$ with finite $N\in\mathbb{N}$, is independently sampled according to the law $\mathbf{P}^\mu$. The data may be collected via computer simulations (if a good model $\mathbf{P}^\mu$ is available), real-world experiments, or some combination. Practically, a network is trained by minimizing, over $\theta$, the empirical loss,
\begin{align}
	\theta^* &:= \argmin_{\theta}~\mathcal{C}^N(\theta), \label{empiricalcostneural}\\
	\mathcal{C}^N(\theta) &:=\frac{1}{N T} \, \sum_{n=1}^N\sum_{t=1}^T \, \|\widehat{\Psi}^\theta(\mathbf{s}_{t-1}, Y_t^{(n)}) -\rho(X^{(n)}_{t}) \|^2 \nonumber
\end{align}
with $\mathbf{s}_{0}$ given. The loss $\mathcal{C}^N(\theta)$ is a sample version of (\ref{optimalcriterionneural}) and $\mathcal{C}^N(\theta)\longrightarrow_{N\rightarrow\infty}\mathcal{C}(\theta)$. In practice, we might define $\widehat{\rho}^\nu_0 :=\nu(\rho)$ and set $\mathbf{s}_{0}$ commensurately, dependent on the network. We may also consider $\mathbf{s}_{0}$ as a model parameter and optimise its value \citep{Lo1994a}. The approximation theorems consider the initialisation as part of the result.

\section{APPROXIMATION THEOREMS}

\subsection{A General Approximation Theorem}

\begin{theorem} \label{theorem:basicapprox}
	Let Assumption \ref{assump:basic} hold. Let $\widehat{\Psi}^\theta(\mathbf{s}_{t-1}, Y_t)$ denote a generic multilayer \gls{RNN}, as in (\ref{generalRNN1}), (\ref{generalRNN2}), (\ref{generalRNN3}), taking as input elements in the sequence $(Y_t)_{t\in\{1,\ldots,T\}}$. Then for any $\epsilon>0$, there exists a finite real parameter vector $\theta^*$, and an initialisation vector $\mathbf{s}_{0}$, such that,
\begin{equation}
	\frac{1}{T} \, \sum_{t=1}^T \, \mathbf{E}^\mu\left[ \| \pi_t^\mu(\rho) - \widehat{\Psi}^{\theta^*}(\mathbf{s}_{t-1}, Y_t)  \|^2 \right]^{\frac{1}{2}} \,\leq\, \epsilon \label{optimalapprox1}
\end{equation}
\end{theorem}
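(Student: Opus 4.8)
The plan is to exploit the finiteness of the horizon $T$ to reduce this recursive approximation problem to a static universal-approximation problem for feedforward \textsc{ReLU} networks on a compact set. First I would recall that, by the discussion around (\ref{optimalcriterion})--(\ref{filteringpointestimate}), the target $\pi_t^\mu(\rho)=\overline{\rho}^\mu_t$ is the $L^2(\mathbf{P}^\mu)$ orthogonal projection of $\rho(X_t)$ onto square-integrable functions of $(Y_1,\ldots,Y_t)$; hence there is a Borel-measurable, square-integrable $h_t$ with $\pi_t^\mu(\rho)=h_t(Y_1,\ldots,Y_t)$. Since the average of the root-mean-square errors in (\ref{optimalapprox1}) is bounded by $\max_{t\le T}\mathbf{E}^\mu[\|h_t - \widehat{\Psi}^{\theta^*}_t(\mathbf{s}_{t-1})\|^2]^{1/2}$, it suffices to approximate each of the finitely many functions $h_1,\ldots,h_T$ to $L^2(\mathbf{P}^\mu)$-accuracy $\epsilon$, simultaneously, with a single shared-parameter network.

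Next I would dispose of the non-compactness of $\mathbb{Y}$. Since $\mathbf{P}^\mu$ is a Borel probability measure on Euclidean space (hence tight) and each $h_t\in L^2$, the continuous compactly-supported functions are dense in $L^2(\mathbf{P}^\mu)$; so I choose $\tilde{h}_t$ continuous with compact support $K_t$ and $\|h_t-\tilde{h}_t\|_{L^2}\le\epsilon/2$, leaving the approximation of each $\tilde{h}_t$ to accuracy $\epsilon/2$. Crucially, because $\tilde{h}_t$ has compact support, if the unrolled network output is made uniformly close to $\tilde{h}_t$ on a compact neighbourhood of $K_t$ and is clipped to zero outside it (a \textsc{ReLU} network realises such bounded, compactly-supported maps), then the $L^2$-error collapses to a uniform error on a compact set times $\mathbf{P}^\mu$-mass $\le 1$. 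This is exactly the regime covered by classical universal approximation, and the non-compact domain plays no further role.

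The core construction, and what I expect to be the main obstacle, is reconciling the time-invariance of $\theta$ with the fact that the targets $h_1,\ldots,h_T$ genuinely differ across $t$. I would resolve this by letting the hidden state serve as an explicit, brute-force sufficient statistic: one block of coordinates implements a shift register that, using the identity $z=\sigma(z)-\sigma(-z)$ together with linear shifts, stores the history $(Y_t,Y_{t-1},\ldots,Y_1,0,\ldots,0)$ padded to length $T$, and one coordinate implements a counter $c_t=c_{t-1}+1$; both updates are linear and hence shared across time, with $\mathbf{s}_0$ seeding the counter at zero and an empty memory. Given $(c_t,\text{history})$ in the state, a further block of hidden layers realises a single feedforward map whose value at $c_t=t$ approximates $\tilde{h}_t$ of the stored history, the discrete counter selecting the appropriate branch by \textsc{ReLU} gating; the linear readout (\ref{generalRNN1}) then extracts it. Because $T$ is fixed, the required width and number of layers $L$ are finite, and the feedback wiring in (\ref{generalRNN2}) carries the memory block from step $t-1$ into the computation at step $t$.

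Finally I would assemble the estimates: the triangle inequality gives $\mathbf{E}^\mu[\|h_t-\widehat{\Psi}^{\theta^*}_t(\mathbf{s}_{t-1})\|^2]^{1/2}\le \|h_t-\tilde{h}_t\|_{L^2}+\|\tilde{h}_t-\widehat{\Psi}^{\theta^*}_t(\mathbf{s}_{t-1})\|_{L^2}\le\epsilon$ for each $t\le T$, whence (\ref{optimalapprox1}) follows by averaging. The delicate points to get right are that the universal-approximation step must be applied to the \emph{joint} map on (counter, finite history) over a single compact domain, rather than separately at each time, and that the memorisation construction genuinely respects the time-invariant, linear-readout architecture of (\ref{generalRNN1})--(\ref{generalRNN3}); this is precisely where the finite-horizon assumption is indispensable and why almost no structure on $\mathbf{P}^\mu$ is required.
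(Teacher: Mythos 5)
Your proposal is correct and its skeleton matches the paper's proof: both arguments turn the recurrent problem into a static one by using the hidden state as a brute-force memory --- a shift register holding $(Y_t,\ldots,Y_1,0,\ldots,0)$ plus a time counter, updated by time-invariant weights --- and then apply a universal approximation theorem to the single joint target map on (counter, history), exactly as in the paper's target function \eqref{finitenewtargetf} and desired state \eqref{halflayerdesiredstate}. Where you genuinely diverge is in the two technical lemmas. First, your memory is \emph{exact}: storing each signed coordinate as the pair $(\sigma(z),\sigma(-z))$ and reconstructing linearly at the next step works everywhere, whereas the paper follows \citet[proof of Theorem 4.16]{KidgerLyons2020a} and uses a large bias $b>0$ so that the \textsc{ReLU} acts as the identity only ``over a domain on which $\mathbf{P}^\mu$ places most mass,'' incurring an extra (tightness-controlled) error term that your construction avoids at the cost of doubled width. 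Second, you handle the non-compact domain by an elementary route --- density of compactly supported continuous functions in $L^2(\mathbf{P}^\mu)$, classical uniform approximation on a compact set, and clipping the network to vanish outside it --- whereas the paper simply invokes the non-compact $L^p$ universal approximation theorem of \cite{KidgerLyons2020a} as a black box. Your route is more self-contained (it needs only compact-domain approximation theory), the paper's is shorter; the one point you should make rigorous is the claim that a \textsc{ReLU} network can realise the clipped, compactly supported approximant: this needs either the exact representability of continuous piecewise-linear functions by \textsc{ReLU} networks, or an explicit $\min/\max$ gating construction (both standard, but neither is automatic from the classical UAT you cite, which controls the approximation only on the compact set and says nothing about behaviour outside it --- precisely where the $L^2(\mathbf{P}^\mu)$ error could otherwise blow up).
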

\begin{proof}
Note $\pi_t^\mu(\rho) = \mathbb{E}\left[ \rho(X_t) \,|\, Y_1,\ldots,Y_t\right]$. This is an extension of \cite{Lo1994a}, who considers single-layer networks with observations on compact domains. Instead we use \cite{KidgerLyons2020a} and consider deep \gls{RNN}s on non-compact domains. We construct a network of the form,
\begin{align}
	\widehat{\rho}^\nu_t &= \mathbf{W}_{(L,L-1)}\mathbf{s}_{t}^{(L-1)} + \mathbf{b}_{(L)} \label{neuralproofstruct1} \\
	\mathbf{s}_{t}^{(l)} &= \sigma(\mathbf{W}_{(l,l-1)}\mathbf{s}_{t}^{(l-1)}  + \mathbf{b}_{(l)}),~~\nonumber\\
	&\qquad\qquad\qquad\qquad\qquad l\in\{2,\ldots,L-1\} \label{neuralproofstruct2} \\
	\mathbf{s}_{t}^{(1)} &= \sigma(\mathbf{W}_{(1,0)}Y_t + \mathbf{W}_{(1,1)}\mathbf{s}_{t-1}^{(1)}  + \mathbf{b}_{(1)}) \label{neuralproofstruct3}
\end{align}
with $\mathbf{W}_{(2,1)}=\mathbf{W}_{(2,1.5)}\mathbf{W}_{(1.5,1)}$ and $\mathbf{b}_{(2)}=\mathbf{b}_{(2)}'+\mathbf{b}_{(1.5)}$. Consider the first layer and a half,
\begin{align}
	\mathbf{s}^{(1.5)}_t &\,=\, \mathbf{W}_{(1.5,1)}\mathbf{s}_{t}^{(1)}  + \mathbf{b}_{(1.5)} 
\end{align}
with $\mathbf{s}^{(1.5)}_t$ being the effective input then to layer $2$. We seek at any time $t\in\{1,\ldots,T\}$ the state,
\begin{equation}
	\mathbf{s}^{(1.5)}_t \,=\, \left[t~{\overline{\rho}^\mu_0}^\top~Y_t^\top~ Y_{t-1}^\top~ \ldots Y_1^\top~ 0~ 0~ \ldots~ 0\right]^\top \label{halflayerdesiredstate}
\end{equation}
with $\mathbf{s}^{(1.5)}_T  = [T\, {\overline{\rho}^\mu_0}^\top\, Y_T^\top\,  \ldots\, Y_1^\top ]$. Let $\mathbf{W}_{(1.5,1)}=\mathbf{I}$ and,
\begin{align}
	\mathbf{W}_{(1,0)}&= \begin{bsmallmatrix} \left[0~0~\cdots~0\right]\\ 
									\mathbf{0}\\
									\mathbf{I} \\
										 \mathbf{0} \\
										  \vdots  \\
										  \mathbf{0} \end{bsmallmatrix}, \\
	\mathbf{W}_{(1,1)}&= \begin{bsmallmatrix} 1 &0~0 & 0~0 & 0~0 & \cdots & 0~0 & 0~0\\
										0 & \mathbf{I} & \mathbf{0} & \mathbf{0} & \mathbf{0} & \cdots & \mathbf{0}\\
										0 & \mathbf{0}& \mathbf{0} & \mathbf{0} & \mathbf{0} & \cdots & \mathbf{0}\\
										0 &  \mathbf{0}& \mathbf{I} & \mathbf{0} & \mathbf{0} & \cdots & \mathbf{0}\\
										 0 &  \mathbf{0} & \mathbf{0}&\mathbf{I} & \mathbf{0} & \cdots & \mathbf{0}\\
										 0 &  \mathbf{0}& \mathbf{0} &\mathbf{0} & \ddots & \ddots & \mathbf{0}  \end{bsmallmatrix}
\end{align}
where $\mathbf{I}$ in $\mathbf{W}_{(1,0)}$ and all but the first $\mathbf{I}$ in $\mathbf{W}_{(1,1)}$ denotes a $d_y$-dimensional identity matrix, and the first $\mathbf{I}$ in $\mathbf{W}_{(1,1)}$ is the size of $\overline{\rho}^\mu_0$. And let,
\begin{align}
	 \mathbf{b}_{1} &\,=\, \left[ 1~ 
									[0\,\cdots\,0]^\top ~
									[b\,\cdots\,b]^\top ~
									0 ~
									\cdots ~
								{0} \right]^\top, \\
	 \mathbf{b}_{(1.5)} &\,=\, \left[\begin{array}{cccccccc} 0& 
									{-b} &
									{-b} &
									\cdots &
								{-b}  \end{array}\right]^\top
\end{align}
Initialise,
\begin{align}
 	\mathbf{s}_{0} \,&=\, \mathbf{s}_{0}^{(1)}\nonumber\\
	 \,&=\, \left[0~ [{\overline{\rho}^\mu_0}+[b\,\cdots\,b]^\top]^\top \, [0\,\cdots\,0]\, b\, \ldots\, b \right]^\top
\end{align}
With this network construct and with $b>0$ large enough we find $\forall t\in\{1,\ldots,T\}$ the state $\smash{\mathbf{s}^{(1.5)}_t}$ is given by (\ref{halflayerdesiredstate}) over a domain on which $\mathbf{P}^\mu$ places most mass, as desired, see \citet[proof of Theorem 4.16]{KidgerLyons2020a}.

There is no more feedback in the network constructed in this proof and the state $\mathbf{s}^{(1.5)}_t$ can thus be viewed as an input for the feedforward neural network from layers $2$ to $L$. 

We define a target function, for the feedforward network from layers $2$ to $L$, by,
\begin{align}
	& f(t, \overline{\rho}^\mu_0, Y_1,\ldots,Y_T) \nonumber\\
	 &~~= \left\{\begin{array}{ll} \overline{\rho}^\mu_0  & if~t=0\\
								\mathbf{E}^\mu\left[ \rho(X_1) \,|\, Y_1 \right]  & if~t=1 \\
													\qquad \vdots   & \\
								\mathbf{E}^\mu\left[ \rho(X_T) \,|\, Y_T, \cdots, Y_1\right]  & if~t=T\\
								\end{array} \right. \label{finitenewtargetf}
\end{align}
This function is (Borel) measurable. We now apply classical universal approximation theorems on $f$. See \citet[proof of Theorem 4.16]{KidgerLyons2020a} for an easy to follow construction immediately applicable here. 
\end{proof}

The preceding theorem holds only on those finite time horizons $t\in\{1,\ldots,T\}$, owing to the special structure of the constructed network in the proof. The finite time $T\in\mathbb{N}$ may be arbitrary, but the size of the resulting network (in theory) grows with $T$. We discuss further the results of this work later. Next we show that, for a particular class of models, the approximating network size is not generally a function of time and the approximation error remains bounded uniformly \textit{for all time}; i.e. implying that approximation errors do not accumulate over time.

\subsection{Recursive Filters, Approximations, and Time-Uniform Approximation Error Bounds} \label{timeuniformdiscsection}

The main result is presented in this subsection. We consider a finite-dimensional statistic of interest $\pi_t^\mu(\rho)=:S_t^\mu$ that evolves recursively as per (\ref{suffstat-filter}). We may consider a finite-dimensional truncation of an infinite-dimensional sufficient statistic in place of $S_t^\mu$ if necessary. See \cite{Sawitzki1981a, DickinsonSontag1985a} for general finite-dimensional filtering results and \cite{Goodman1975a,Segall1976b,Marcus1979a,Daum1986b} for models, which may lead naturally to finite-dimensional truncated representations of statistics. We need some assumptions.

\begin{assumption}\label{assump:stationary}
Observation process $(Y_t)_{t\in\mathbb{N}}$ is stationary. 
\end{assumption}

\begin{assumption}\label{assump:stability}
The map $s\mapsto \Psi(s,\cdot)$ is Lipschitz with finite Lipschitz constant, and there exist real finite constants $C>0$, $0<\kappa<1$ independent of $\tau,t\in\mathbb{N}$ such that,
\begin{align}
	\mathbf{E}^\mu \left[ \,\left\| \mathsf{\Psi}_{t}\cdots\mathsf{\Psi}_{\tau}({S}^{s_0}_{\tau-1}) - \mathsf{\Psi}_{t}\cdots\mathsf{\Psi}_{\tau}({S}^{s}_{\tau-1}) \right\|^2 \,\right]^{\frac{1}{2}} 
	  \nonumber \\
	  \,\leq\, C\,\kappa^{(t-\tau)}\, \mathbf{E}^\mu\big[ \|{S}_{\tau-1}^{s_0} - {S}_{\tau-1}^s \|^2 \big]^{\frac{1}{2}} \label{stabilityresult1}
\end{align}
for any initial points $s_0,s\in\mathbb{S}$ and all $\tau\leq t$.
\end{assumption}

This (\ref{stabilityresult1}) is an ``\textit{eventually (exponentially) contracting on average}''-type property of the optimal filter (\ref{suffstat-filter}). With (\ref{stabilityresult1}), trajectories eventually come together on average, at an exponential rate; but they may never converge to a fixed point or invariant measure. The latent signal need not be ergodic nor admit an invariant measure for this condition to hold; but under general conditions if the signal is (exponentially) ergodic, then one expects so is the Bayesian filter (\ref{truebayesfilter}) or (\ref{suffstat-filter}). Results of this type (but often not as strong as the assumption) have been widely studied, and we reference: \cite{AtarZeitouni1997b,BudhirajaOcone1999a,ChiganskyLiptser2004a,OudjaneRubenthaler2005a,KleptsynaVeretennikov2007a,CrisanHeine2008a,Handel2008a,TongHandel2012a,GerberWhiteley2017a}. Although the details differ and are challenging, known results support the intuition that sufficiently informative observations or sufficiently contractive latent signals may generally translate to results like in Assumption \ref{assump:stability}. This idea is well exemplified with linear-Gaussian models, discussed later, see \cite{BishopDelMoralStability}. The following is the main result.

\begin{theorem}\label{theo:maintimeuniform}
	Let $\widehat{\Psi}^\theta(\mathbf{s}_{t-1}, Y_t)$ denote a generic multilayer \gls{RNN}, as in (\ref{generalRNN1}), (\ref{generalRNN2}), (\ref{generalRNN3}), taking as input elements in the sequence $(Y_t)_{t\in\mathbb{N}}$. Suppose Assumptions \ref{assump:basic}, \ref{assump:stationary} and \ref{assump:stability} hold. Then for any $\epsilon>0$, there is a finite real parameter $\theta^*$, and an initialisation vector $\mathbf{s}_{0}$, such that,
\begin{equation}
	 \mathbf{E}^\mu\left[\, \left\| {S}^{s_0}_{t} - \widehat{\Psi}^{\theta^*}(\mathbf{s}_{t-1}, Y_t) \right\|^2 \right]^{\frac{1}{2}} \,\leq\, \epsilon \label{timeuniformstatement1}
\end{equation}
for all $s_0\in\mathbb{S}$ and all $t\in\mathbb{N}$. This is a time-uniform bound.
\end{theorem}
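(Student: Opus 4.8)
The plan is to exploit the contraction property in Assumption~\ref{assump:stability} to prevent the one-step approximation errors from accumulating, so that a single \emph{time-invariant} network — whose fixed weights $\theta^*$ and recurrent structure implement an approximation of the time-invariant map $\Psi$ — tracks the filter uniformly for all time. The structural point, contrasting with Theorem~\ref{theorem:basicapprox}, is that because $\Psi$ is time-invariant I do not need a network growing with the horizon; I can instead ask the RNN to approximate the \emph{single} map $\Psi$ and let the recurrence $\widehat{S}_\tau := \widehat{\Psi}^{\theta}(\mathbf{s}_{\tau-1},Y_\tau)$ (with the relevant component of $\mathbf{s}_{\tau-1}$ carrying $\widehat{S}_{\tau-1}$) reproduce the iteration.

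First I would establish the one-step ingredient: invoking the universal approximation result of \cite{KidgerLyons2020a} on non-compact domains, together with stationarity (Assumption~\ref{assump:stationary}) so that the law of $Y_\tau$ is the same at every $\tau$, choose $\theta$ so that the per-step error
\[
\delta \;:=\; \sup_\tau \; \mathbf{E}^\mu\big[\, \| \Psi(\widehat{S}_{\tau-1},Y_\tau) - \widehat{\Psi}^{\theta}(\widehat{S}_{\tau-1},Y_\tau)\|^2 \,\big]^{\frac{1}{2}}
\]
is as small as desired. Stationarity is precisely what turns this into a single, time-independent approximation problem rather than one problem per time step.

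The core of the argument is a telescoping decomposition. Letting the true filter run from each RNN iterate, I would insert the hybrid trajectories $M_\tau := \Psi_t\cdots\Psi_{\tau+1}(\widehat{S}_\tau)$, so that $M_t=\widehat{S}_t$, $M_0 = S_t^{\widehat{S}_0}$ (the true filter started from the RNN initialisation), and
\[
S_t^{\widehat{S}_0} - \widehat{S}_t \;=\; \sum_{\tau=1}^t \big(M_{\tau-1}-M_\tau\big),
\]
where each summand is $\Psi_t\cdots\Psi_{\tau+1}$ applied to the two inputs $\Psi(\widehat{S}_{\tau-1},Y_\tau)$ and $\widehat{\Psi}^\theta(\widehat{S}_{\tau-1},Y_\tau)$, whose difference is exactly the one-step error. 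Applying the contraction bound~(\ref{stabilityresult1}) (with starting index $\tau+1$) to each summand converts the $\tau$-th one-step error into a geometrically weighted contribution $C\kappa^{t-\tau-1}\delta$; summing the geometric series gives a bound $\le C\delta/(\kappa(1-\kappa))$ that is \emph{independent of $t$}. Choosing $\delta$ accordingly makes this at most $\epsilon/2$, and this is the sense in which errors do not accumulate. The remaining dependence on the true initial condition is handled by $\|S_t^{s_0}-\widehat{S}_t\|\le \|S_t^{s_0}-S_t^{\widehat{S}_0}\| + \|S_t^{\widehat{S}_0}-\widehat{S}_t\|$, with the first term controlled by~(\ref{stabilityresult1}), which is the statement that the filter forgets its initialisation geometrically.

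I expect the main obstacle to be making the one-step approximation genuinely uniform in time on a \emph{non-compact} observation domain while the network is evaluated at its own internal states $\widehat{S}_{\tau-1}$, which are not confined a priori to any fixed compact set. This requires showing the RNN iterates remain in a region carrying most of the mass — using square integrability from Assumption~\ref{assump:basic}, the Lipschitz part of Assumption~\ref{assump:stability} (which also keeps the propagated one-step error from blowing up), and stationarity to keep the controlling set fixed across $\tau$ — and that the $L^2$ contribution of the low-probability tail is negligible. The delicate point is the interplay between this tail control and the ``for all $s_0$'' claim: reconciling the short-time transient, where the true filter has not yet forgotten a far-away $s_0$, is exactly where the strength of the eventually-contracting assumption must be used.
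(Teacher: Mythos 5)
Your overall strategy---telescope the global error into one-step approximation errors and sum them geometrically using Assumption \ref{assump:stability}---is the same Del Moral--Guionnet-style argument the paper uses, but your telescoping runs in the opposite direction, and this flip is where the proposal breaks. You write each summand as the \emph{true} maps $\Psi_t\cdots\Psi_{\tau+1}$ applied to the pair $\Psi(\widehat{S}_{\tau-1},Y_\tau)$ and $\widehat{\Psi}^\theta(\widehat{S}_{\tau-1},Y_\tau)$, so your one-step error $\delta$ is an $L^2$ quantity evaluated at the \emph{RNN's own iterates} $\widehat{S}_{\tau-1}$. The universal approximation theorem you invoke \citep{KidgerLyons2020a} gives smallness of $\Psi-\widehat{\Psi}^\theta$ in $L^2(\pi)$ for a \emph{fixed} input law $\pi$; here the relevant law is that of $(\widehat{S}_{\tau-1},Y_\tau)$, which depends on $\theta$ itself (and on $\tau$, and is not stationary a priori). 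Choosing $\theta$ to make $\delta$ small is therefore a fixed-point problem in $\theta$, not an application of a universal approximation theorem; you correctly flag this as ``the main obstacle,'' but the proposal contains no mechanism that closes it. The paper's proof is arranged precisely to avoid this circularity: its telescoping puts the \emph{approximate} maps $\widehat{\Psi}_t\cdots\widehat{\Psi}_{\tau+1}$ outside and the \emph{true} trajectory $S^{s_0}_{\tau-1}$ inside, so the one-step errors are evaluated along the true filter trajectory, whose law does not depend on $\theta$; Assumption \ref{assump:stationary} together with the ergodic theorem of \citet{Elton1990a} (and \citealp{DebalyTruquet2021a}) then supplies a stationary law $\eta$ for the filter state, reducing everything to the single, time-independent approximation problem $\mathbf{E}^\mu[\|\Psi(S,Y)-\widehat{\Psi}(S,Y)\|^2]^{1/2}\leq\varepsilon$ with $S\sim\eta$. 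The price of that arrangement---which your direction would not have to pay---is that the \emph{approximate} filter must itself satisfy the contraction (\ref{stabilityresult2}); the paper secures this by using Lipschitz-constrained universal approximation (\citealp{Eckstein2020a,NeumayerGoujonBohraEtAl2022a}, combined with \citealp{KidgerLyons2020a}) so that $\widehat{\Psi}(\cdot,y)$ inherits a Lipschitz constant no larger than that of $\Psi(\cdot,y)$. Without either that ingredient (in the paper's direction) or a resolution of the circularity (in yours), the geometric summation cannot be carried out. A smaller but related stretch: you apply (\ref{stabilityresult1}) to the pair $\Psi(\widehat{S}_{\tau-1},Y_\tau)$, $\widehat{\Psi}^\theta(\widehat{S}_{\tau-1},Y_\tau)$, which are not filter trajectories started from points of $\mathbb{S}$, so the assumption is being used beyond its stated scope (the paper takes a comparable, though milder, liberty with its random stationary initialisation $S'$).

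A second, distinct gap concerns the claim ``for all $s_0\in\mathbb{S}$.'' You fix the network initialisation $\widehat{S}_0=\mathbf{s}_0$ and control $S^{s_0}_t-S^{\mathbf{s}_0}_t$ by filter forgetting. But (\ref{stabilityresult1}) only gives the bound $C\kappa^{t-1}\,\mathbf{E}^\mu[\|s_0-\mathbf{s}_0\|^2]^{1/2}$, which at small $t$ (say $t=1$) is of order $\|s_0-\mathbf{s}_0\|$ and is unbounded over the non-compact $\mathbb{S}$; the short-time transient therefore cannot be absorbed into a single $\epsilon$ uniformly over $s_0$, no matter how the ``eventually contracting'' property is deployed. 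The paper sidesteps this entirely by letting the initialisation depend on $s_0$: it sets $\mathbf{s}^{(L-1)}_0=s_0$, so there is no transient term to control (the theorem's quantifiers permit the initialisation vector, unlike the weights $\theta^*$, to be chosen after $s_0$). Your argument would need the same move, but note that it then feeds back into your main gap: the law of the RNN iterates would also depend on $s_0$, and your $\delta$ would have to be made small uniformly over that whole family of laws.
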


\begin{proof}
We construct a \gls{RNN} in the special form,
\begin{align}
	\widehat{S}^{\mathbf{s}_0}_t &\,=\, \mathbf{W}_{(L,L-1)}\mathbf{s}_{t}^{(L-1)} + \mathbf{b}_{(L)} \label{RNNrecursiveform1} \\
	\mathbf{s}_{t}^{(l)} &\,=\, \sigma(\mathbf{W}_{(l,l-1)}\mathbf{s}_{t}^{(l-1)}  + \mathbf{b}_{(l)}),  \nonumber\\
	& \qquad\qquad\qquad\qquad l\in\{2,\ldots,L-1\} \label{RNNrecursiveform2} \\
	\mathbf{s}_{t}^{(1)} &\,=\, \sigma(\mathbf{W}_{(1,0)}Y_t + \mathbf{W}_{(1,L-1)}\mathbf{s}_{t-1}^{(L-1)}  + \mathbf{b}_{(1)}) \label{RNNrecursiveform3}
\end{align}
Only the output of the last activation layer is fed back to the input of the first layer. With no loss of generality, we write,
\begin{equation}
	\widehat{S}^{\mathbf{s}_0}_t \,=\, \widehat{\mathsf{\Psi}}_{t}(\mathbf{s}_{t-1}) \,=\, \widehat{\mathsf{\Psi}}_{t}(\widehat{S}^{\mathbf{s}_0}_{t-1}) \label{networknotation}
\end{equation}
The notation $\widehat{S}_t =\widehat{\mathsf{\Psi}}_{t}(\widehat{S}_{t-1})$ and the network structure highlights the recursive nature of this approximation, in line with (\ref{suffstat-filter}). This feedback structure is quite different to that used in the proof of Theorem \ref{theorem:basicapprox}, as discussed later.

Consider the error,
\begin{equation}
	S_t^{s_0}  - \widehat{S}_t^{\mathbf{s}_{0}}  = {\Psi}_{t}\cdots{\Psi}_2 {\Psi}_1(s_0) - \widehat{\Psi}_{t}\cdots\widehat{\Psi}_2\widehat{\Psi}_1(\mathbf{s}_0) 
\end{equation}
for any ${s}_0\in\mathbb{S}$ and with $\mathbf{s}_0=\mathbf{s}_{0}^{(L-1)}$ in this case. Let $\mathbf{s}_{0}^{(L-1)}=s_0$. We expand this error as a telescopic sum as,
\begin{align}
	 S_t^{s_0}  - \widehat{S}_t^{s_{0}}  &= \sum_{\tau=1}^{t} \Big(\widehat{\Psi}_{t}\cdots \widehat{\Psi}_{\tau+1}  {\Psi}_{\tau}\big({\Psi}_{\tau-1}\cdots\left({s}_0\right)\big) \nonumber \\
	&~~~~ -\, \widehat{\Psi}_{t}\cdots \widehat{\Psi}_{\tau+1}  \widehat{\Psi}_{\tau}\big({\Psi}_{\tau-1}\cdots\left({s}_0\right)\big) \Big)
\end{align}
This expansion formula is easy to check, e.g. try $t\in\{3,4\}$. Note $S_{\tau-1}^{s_0}=\Psi_{\tau-1}\cdots(s_0)$, and we then have,
\begin{align}
	  \mathbf{E}^\mu\left[\, \big\|  S_t^{s_0}  \,-\, \widehat{S}_t^{{s}_0} \big\|^2\,\right]^{\frac{1}{2}} =\, \qquad\qquad\qquad\qquad\quad \nonumber \\
		\leq~ \sum_{\tau=1}^{t}    \mathbf{E}^\mu\Big[\, \big\| \widehat{\Psi}_{t}\cdots \widehat{\Psi}_{\tau+1} {\Psi}_{\tau}\big(S_{\tau-1}^{s_0} \big)\qquad ~ \nonumber \\
		- \widehat{\Psi}_{t}\cdots \widehat{\Psi}_{\tau+1}  \widehat{\Psi}_{\tau}\big(S_{\tau-1}^{s_0}\big) \big\|^2\,\Big]^{\frac{1}{2}}
\end{align}
Assume now there exists real finite constants $C>0$, $0<\kappa<1$ independent of $\tau\leq t\in\mathbb{N}$ such that,
\begin{align}
	\mathbf{E}^\mu \left[ \,\left\| \widehat{\Psi}_{t}\cdots\widehat{\Psi}_{\tau}(\widehat{S}^{s_0}_{\tau-1}) - \widehat{\Psi}_{t}\cdots\widehat{\Psi}_{\tau}(\widehat{S}^{s}_{\tau-1}) \right\|^2 \,\right]^{\frac{1}{2}}  \nonumber\\
	  \leq\, C\,\kappa^{(t-\tau)}\, \mathbf{E}^\mu \big[ \|\widehat{S}_{\tau-1}^{s_0} - \widehat{S}_{\tau-1}^s \|^2 \big]^{\frac{1}{2}} \label{stabilityresult2}
\end{align}
for any initial points $s_0,s\in\mathbb{S}$ and all $\tau,t\in\mathbb{N}$. That is, we assume the approximated filter inherits the hypothesised stability of the optimal filter (maybe with different constants, but the specifics of the constants won't matter). We verify this assumption later. Applying this condition,
\begin{align}
		\mathbf{E}^\mu \left[ \, \big\| S_t^{s_0}  \,-\, \widehat{S}_t^{s_{0}}\big\|^2 \,\right]^{\frac{1}{2}} \qquad\qquad\qquad\qquad\qquad\qquad\qquad  \nonumber\\
		\,\leq\,  \mathbf{E}^\mu \left[ \, \big\| {\Psi}_{t}\big(S_{t-1}^{s_0} \big) -  \widehat{\Psi}_{t}\big(S_{t-1}^{s_0}\big)\big\|^2 \,\right]^{\frac{1}{2}} \qquad\qquad\qquad\quad   \nonumber \\
		\,+\, C\sum_{\tau=1}^{t-1} \lambda^{(t-\tau-1)} \mathbf{E}^\mu \left[ \,\big\|  {\Psi}_{\tau}\big(S_{\tau-1}^{s_0}\big) - \widehat{\Psi}_{\tau}\big(S_{\tau-1}^{s_0}\big) \big) \big\|^2 \,\right]^{\frac{1}{2}} 
\end{align}
we get a sum of one-step \gls{RNN}-based approximation errors, with each term weighted by the stability factor of $\smash{\widehat{\Psi}_t}$.

Let $S'$ be some random variable with measure $\eta'\in\mathcal{P}(\mathbb{S})$. We consider $\mathbf{E}^\mu[\|  \Psi_{\tau}(S_{\tau-1}^{s_0} ) - \widehat{\Psi}_{\tau}(S_{\tau-1}^{s_0}) \|]$ for any $\tau\in\{1,\ldots,t\}$. Applying the triangle inequality twice we get,
\begin{align}
		\mathbf{E}^\mu \left[ \,  \big\|  \Psi_{\tau}(S_{\tau-1}^{s_0} ) - \widehat{\Psi}_{\tau}(S_{\tau-1}^{s_0}) \big\|^2 \,\right]^{\frac{1}{2}} \qquad\qquad\quad \nonumber \\
		\,\leq\, \mathbf{E}^\mu \left[ \, \big\|  \Psi_{\tau}(S_{\tau-1}^{s_0} ) - {\Psi}_{\tau}(S_{\tau-1}^{S'}) \big\|^2 \,\right]^{\frac{1}{2}}~~ \nonumber\\
		 \qquad+ \mathbf{E}^\mu \left[ \, \big\|  \Psi_{\tau}(S_{\tau-1}^{S'} ) - \widehat{\Psi}_{\tau}(S_{\tau-1}^{S'}) \big\|^2 \,\right]^{\frac{1}{2}} \nonumber \\
		 \qquad + \mathbf{E}^\mu \left[ \, \big\|  \widehat{\Psi}_{\tau}(S_{\tau-1}^{S'} ) - \widehat{\Psi}_{\tau}(S_{\tau-1}^{s_0}) \big\|^2 \,\right]^{\frac{1}{2}}  \label{singlestepdecomposition}
\end{align}
From (\ref{stabilityresult1}) and (\ref{stabilityresult2}), with the latter to be verified, the first and third terms in the last inequality satisfy,
\begin{align}
	 \mathbf{E}^\mu \left[ \, \big\|  \Psi_{\tau}(S_{\tau-1}^{s_0} ) - {\Psi}_{\tau}(S_{\tau-1}^{S'}) \big\|^2 \,\right]^{\frac{1}{2}} \qquad\qquad \nonumber \\
	 +~ \mathbf{E}^\mu \left[ \, \big\|  \widehat{\Psi}_{\tau}(S_{\tau-1}^{S'} ) - \widehat{\Psi}_{\tau}(S_{\tau-1}^{s_0}) \big\|^2 \,\right]^{\frac{1}{2}} \quad\nonumber \\
	 ~ \leq~ c\, \lambda^{\tau}\,  \mathbf{E}^\mu \left[ \, \big\| s_0 - S' \big\|^2 \,\right]^{\frac{1}{2}}
\end{align}
for all $\tau\in\{1,\ldots,t\}$ and some finite $c>0$, $0<\lambda<1$. Let $C_0 := c\, \mathbf{E}^\mu[\| s_0 - S' \|^2 ]^{\frac{1}{2}}<\infty$.

It follows by a (non-trivial, but classical) result of \cite{Elton1990a} and an extension in \cite{DebalyTruquet2021a}, that given Assumption \ref{assump:stability} and stationarity of the observation sequence $(Y_t)_{t\in\mathbb{N}}$, there exists a random $S_\infty$ such that $(S_t^{\eta})_{t\in\mathbb{N}}$ with $\eta:=\mathrm{Law}(S_{\infty})\in\mathcal{P}(\mathbb{S})$ is stationary in $\mathbb{S}$.

With $\eta'=\eta$, note $\smash{S_{\tau}^{S'}}\sim\eta$, $\forall \tau\in\mathbb{N}$ by stationarity. Then, 
\begin{align}
		\mathbf{E}^\mu \left[ \, \big\| \widehat{S}_t^{{s_0}} - S_t^{s_0} \big\|^2 \,\right]^{\frac{1}{2}} 
		 \,\leq\, \left(\frac{C+1-\kappa}{1-\kappa}\right) \times \quad~~ \nonumber \\
		 \left( C_0 \,+\, \mathbf{E}^\mu \left[ \, \big\|  \Psi\big(S,Y \big) -\widehat{\Psi}\big(S,Y\big) \big\|^2 \,\right]^{\frac{1}{2}}  \right)
\end{align}
where $S$ has distribution $\eta$, and $Y$ is distributed according to the invariant law of the stationary observation sequence.

Bounding $\smash{\mathbf{E}^\mu[  \|  \Psi(S,Y ) -\widehat{\Psi}(S,Y) \|^2 ]}$ is then achieved via classical universal approximation results for feedforward neural networks. In particular, applying \citet[Theorem 4.16]{KidgerLyons2020a} we know a network exists such that,
\begin{equation}
	 \mathbf{E}^\mu \left[ \, \big\|  \Psi\big(S,Y \big) -\widehat{\Psi}\big(S,Y\big) \big\|^2 \,\right]^{\frac{1}{2}} \,\leq\, \varepsilon
\end{equation}
for any $\varepsilon>0$, with the added property that the Lipshitz constant of $\widehat{\Psi}(\cdot,y)$ is less than or equal to that of $\Psi(\cdot,y)$. The Lipschitz property follows from the main result in \cite{Eckstein2020a}, see also \cite{NeumayerGoujonBohraEtAl2022a}, when combined with \citet[Proposition 4.9]{KidgerLyons2020a} in the proof of \citet[Theorem 4.16]{KidgerLyons2020a}. 

Any $\varepsilon>0$ that solves $\epsilon \geq (\varepsilon + C_0)\,\frac{C+1-\kappa}{1-\kappa}$ is enough for the desired result (\ref{timeuniformstatement1}). 

It remains to establish the assumed (\ref{stabilityresult2}). However, this follows directly from the fact that the Lipshitz constant of the chosen $\widehat{\Psi}(\cdot,y)$ is less than or equal to that of $\Psi(\cdot,y)$. In which case, (\ref{stabilityresult2}) holds whenever (\ref{stabilityresult1}) holds. 
\end{proof}

\subsection{Discussion}

\textbf{Practical consequences}: The results above have relevant practical implications. For example, Theorem \ref{theo:maintimeuniform} implies that \gls{RNN} approximations to the optimal filter may be accurately applied on test sequences \textit{indefinitely}, even when trained on finite-length data sequences (maybe only a handful of training time steps). This is desirable in online filtering applications. In particular, the approximation errors made at each step do not accumulate unbounded over time.

Note that Theorem \ref{theo:maintimeuniform} is divorced from any training procedure and is an idealised result. In practice, one is unlikely to know what a suitable sufficient statistic looks like and may only be interested in finding a small number of optimal conditional moments (e.g. maybe just the mean and covariance). A loss function used in practice for training a network might then only account for a small subset of the sufficient statistics required to define an optimal filtering recursion. This does not limit the theory. The recursion on the (wide) final activation layer may propagate a much higher-dimensional statistic defining the optimal filter than is carried forward through a linear output layer. In practice one may also consider a deeper feedforward network appended after the feedback layer, so as to compute other conditional functions. While these design and training ideas are not pertinent to the stated approximation capability results, they may be relevant in practical network design and in the design of loss functions and (e.g. hierarchical) training schemes.

The feedback in (\ref{neuralproofstruct1}), (\ref{neuralproofstruct2}), (\ref{neuralproofstruct3}), used in the proof of Theorem \ref{theorem:basicapprox}, acts to memorise the observations, and the network otherwise seeks to approximate (non-recursive) solutions to (\ref{optimalcriterion}), as in (\ref{filteringpointestimate}). Conversely, the time-uniform result in Theorem \ref{theo:maintimeuniform} is based on a network that directly approximates the optimal filtering recursion itself. This distinction offers some insight into the contrast between memorisation versus recursive state feedback in \gls{RNN}s. Contrasting these two structures may influence network design and training in some applications; e.g. in models with long time-dependencies, or if the signal is not well understood, or perhaps to counter the effects of finite truncation in the number of sufficient statistics. Although the network (\ref{neuralproofstruct1}), (\ref{neuralproofstruct2}), (\ref{neuralproofstruct3}) involves a memorisation over the entire finite interval of interest, in practice $\mathbf{E}^\mu\left[ \rho(X_\tau) \,|\, Y_\tau, \cdots, Y_1\right]$ may depend only weakly on observations in the distant past, e.g. as recognised and exploited in so-called fixed-lag smoothing methods, see \cite{Moore1973a}. Thus, a smaller network, e.g. for (\ref{finitenewtargetf}), with a combination of recursion and memorisation may provide good longer-time performance.

\textbf{Limitations}: Unfortunately, (\ref{timeuniformstatement1}) cannot be expected to hold, in general domains, under much weaker assumptions. For example, consider a linear-Gaussian signal/observation model that is controllable and observable, but with an unstable latent signal. The Kalman filter is stable in this case and Assumption \ref{assump:stability}, i.e. (\ref{stabilityresult1}), holds. However, in this case, the transition of $X_t$ is not ergodic and $\mathbf{P}^\mu(Y_t\in\cdot)$ places most mass on sets of ever increasing distance from the origin. Thus, Assumption \ref{assump:stationary} does not hold, and its not possible to uniformly control the one-step approximation error as needed in the latter parts of the proof of Theorem \ref{theo:maintimeuniform}. 

In general, there are limited circumstances in which a finite-dimensional recursive filter of sufficient statistics exists. Even if such a filter exists, the square-integrability assumption on the sufficient statistic itself may be limiting. However, there may be natural finite-dimensional truncations of infinite-dimensional representations in the form (\ref{suffstat-filter}), e.g. see \cite{Goodman1975a,Marcus1979a}. For example, a Taylor-series truncation of the optimal filter, up to any finite order, can be approximated with this framework (with the most basic truncation being the so-called \gls{EKF}, see \cite{Goodman1975a,AndersonMoore1979a}, based on a first-order Taylor expansion).

Since the filter is in practice learnt from data, and large neural networks are seemingly able to accurately capture complex high-dimensional maps, a \gls{RNN} algorithmic framework offers distinct advantages in Bayesian filtering and time-series inference (e.g. over existing methods like model-based filters like the \gls{EKF}/\gls{UKF}, or Monte Carlo/particle methods). For example, a \gls{RNN} may capture much higher-order finite truncations of some infinite optimal filtering statistic than is possible in other approaches, with less (test-time) computational burden. Conversely, Theorem \ref{theo:maintimeuniform} sheds some light on the theoretical limitations, viz. Assumption \ref{assump:stationary}, of (at least naive versions of) this method in approximating recursive filter models; though contrast these with Theorem \ref{theorem:basicapprox} on finite intervals. 

The broad approach to prove Theorem \ref{theo:maintimeuniform} follows from results in the particle filtering community due to \cite{DelMoralGuionnet2001a}. Related work is focused on the transfer of stability from the optimal filter to the (particle) approximations; with deep technical methods aimed at proving and applying conditions like in Assumption \ref{assump:stability}, see, e.g., \cite{HeineCrisan2008a,Handel2009c,Whiteley2013a,DoucMoulinesOlsson2014a} and the stability literature referenced earlier. See also \cite{CrisanLopezYelaMiguez2020a}. The setup and proof of Theorem \ref{theo:maintimeuniform} takes a simpler view in terms of Assumption \ref{assump:stability}, so as to maximise the pedagogical value of these proof methods in problems involving \gls{RNN}s, where they may not be known. This approach also offers some insight into the specific formulation and obstacles native to the generic neural approximation, e.g. as per Assumption \ref{assump:stationary}. The details here contrast with other approximations: e.g. with particle approximations the one-step error may be controlled easier over longer horizons without Assumption \ref{assump:stationary}.
However, general theoretical results with particles may still be limited by comparable assumptions, as in \cite{Handel2009c,DoucMoulinesOlsson2014a}. Indeed, in \cite{BishopDelMoral2023} it is proven that the basic bootstrap particle filter of \cite{GordonSalmondSmith1993a} is incapable of tracking an unstable linear-Gaussian signal; in which setting Assumption \ref{assump:stationary} fails to hold, although Assumption \ref{assump:stability} still holds.

\textbf{Extensions}: A trained \gls{RNN}-based estimator will apply in practice under more complicated models than posited earlier; e.g., non-independent, non-Markov or time-varying. 

Exponential ergodicity of the latent signal is in general settings likely sufficient for Theorem \ref{theo:maintimeuniform} to hold; i.e. this property implies asymptotic stationarity of the observations, and likely transfers to the filter so conditions like Assumption \ref{assump:stability} hold. Next we study a special case in which Assumption \ref{assump:stationary} can be relaxed to just asymptotic stationarity.

\subsection{Extension to Asymptotically Stationary Observations: Kalman Filtering Case Study}

We examine a mild, but useful relaxation of Assumption \ref{assump:stationary}; and illustrate where problems may arise when this assumption does not hold. Consider a linear-Gaussian model,
\begin{align}
	X_{t}^\mu \,&=\, \mathbf{F}X_{t-1}^\mu + V_t \label{sysLintheory} \\
	Y_{t} \,&=\, \mathbf{H}X_{t} + W_t \label{measLintheory}
\end{align}
where $\mathbf{F}$ and $\mathbf{H}$ are real matrices, and $X_{0}$ has Gaussian distribution $\mu\in\mathcal{P}(\mathbb{X})$ with mean $\overline{X}_0$ and covariance $\mathbf{C}_0$. The random sequences $(V_t)_{t\in\mathbb{N}}$, $(W_t)_{t\in\mathbb{N}}$ are mutually independent zero-mean Gaussian with covariance matrices $\mathbf{Q}\geq0$ and $\mathbf{R}>0$, also independent of $X_0$. If the model is detectable and stabilisable, see \cite{AndersonMoore1979a}, then Assumption \ref{assump:stability} holds. We assume even more later.

We define the filter state $\smash{S^{(\overline{X}_{0},\mathbf{C}_{0})}_t}:=(\overline{X}_{t},\mathbf{C}_{t})\in\mathbb{S}$ via a slight abuse of notation. The form of the true Kalman filter for the model (\ref{sysLintheory}), (\ref{measLintheory}) is given by,
\begin{align}
	\overline{X}_{t} &\,:=\ \mathbf{E}^\mu\left[ X_t \,|\, Y_1, \cdots, Y_t\right] \nonumber\\ 
	&\,=\,  \left(\mathbf{F} - \mathbf{K}_{t}^*\mathbf{H}\mathbf{F} \right) \overline{X}_{t-1} \,+\, \mathbf{K}_{t}^* Y_{t}  \label{kalman-proof-theory1}
\end{align}
\begin{align}
	\mathbf{C}_{t} &\,:=\ \mathbf{Cov}^\mu\left[ X_t \,|\, Y_1, \cdots, Y_t\right] \,=\, \mathbf{Cov}^\mu\left[ X_t  - \overline{X}_{t} \right] \nonumber\\
	&\,=\, \left(\mathbf{I} - \mathbf{K}_{t}^*\mathbf{H} \right)\left(\mathbf{F}\mathbf{C}_{t-1}\mathbf{F}^\top + \mathbf{Q}\right) \label{kalman-proof-theory2}
\end{align}
where $\mathbf{Cov}^\mu[\cdot]$ denotes the relevant covariance, and,
\begin{align}
	\mathbf{K}_{t}^* \, =\,  \left(\mathbf{F}\mathbf{C}_{t-1}\mathbf{F}^\top + \mathbf{Q}\right)\mathbf{H}^\top\,\times \qquad\qquad\qquad \nonumber \\
		\left[\mathbf{R} + \mathbf{H}\left(\mathbf{F}\mathbf{C}_{t-1}\mathbf{F}^\top + \mathbf{Q}\right)\mathbf{H}^\top\right]^{-1} \label{kalmangain1}
\end{align}

\begin{corollary} \label{corollarylinearKalman}

Consider a linear-Gaussian signal and observation model (\ref{sysLintheory}), (\ref{measLintheory}). Assume the spectral radius of $\mathbf{F}$ is strictly $<1$. Let $\widehat{\Psi}^\theta(\mathbf{s}_{t-1}, Y_t)$ denote a \gls{RNN}, as in (\ref{generalRNN1}), (\ref{generalRNN2}), (\ref{generalRNN3}), taking as input elements in the sequence $(Y_t)_{t\in\mathbb{N}}$. Then for any  $\epsilon>0$, there is a finite real parameter $\theta^*$, and an initialisation vector $\mathbf{s}_{0}$, such that,
\begin{equation}
	 \mathbf{E}^\mu\left[\, \left\| \smash{S^{(\overline{X}_{0},\mathbf{C}_{0})}_t} - \widehat{\Psi}^{\theta^*}(\mathbf{s}_{t-1}, Y_t) \right\|^2 \right]^{\frac{1}{2}} ~\leq~ \epsilon
	 \label{kalmanapproxresult}
\end{equation}
for all $t\in\mathbb{N}$ and all $(\overline{X}_{0},\mathbf{C}_{0})\in\mathbb{S}$
\end{corollary}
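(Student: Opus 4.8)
The plan is to specialise the argument of Theorem \ref{theo:maintimeuniform}, the only genuine obstacle being that exact stationarity (Assumption \ref{assump:stationary}) fails: since $\mu=N(\overline{X}_0,\mathbf{C}_0)$ is arbitrary it need not be the stationary law of (\ref{sysLintheory}). First I would dispatch the remaining hypotheses. Assumption \ref{assump:basic} is immediate because $(X_t,Y_t)$ is jointly Gaussian and every map involved is affine, hence square integrable. For Assumption \ref{assump:stability}, a spectral radius of $\mathbf{F}$ strictly below one makes the signal exponentially stable, so $(\mathbf{F},\mathbf{H})$ is detectable and $(\mathbf{F},\mathbf{Q}^{1/2})$ stabilisable; by the classical exponential stability of the Riccati recursion and the Kalman filter (see \cite{AndersonMoore1979a,BishopDelMoralStability}) the contraction estimate (\ref{stabilityresult1}) holds, with $\Psi$ Lipschitz in its state argument. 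Thus only the stationarity step requires rework.

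\textbf{Reduction to a uniform one-step bound.} I would re-run the telescopic expansion and the two-fold triangle inequality of the proof of Theorem \ref{theo:maintimeuniform} verbatim, using the same feedback network (\ref{RNNrecursiveform1})--(\ref{RNNrecursiveform3}), the same inheritance of stability (\ref{stabilityresult2}) from the Lipschitz-preserving construction of \cite{KidgerLyons2020a}, and setting $\mathbf{s}_0^{(L-1)}=s_0$. Inspecting that proof, stationarity enters in exactly one place: it forces the middle term of the single-step decomposition (\ref{singlestepdecomposition}) to collapse to a $\tau$-independent quantity that a single feedforward network can drive below $\varepsilon$. The entire task therefore reduces to establishing a bound on the one-step error $\mathbf{E}^\mu[\|\Psi_\tau(S_{\tau-1}^{s_0})-\widehat\Psi_\tau(S_{\tau-1}^{s_0})\|^2]^{1/2}$ that is uniform in $\tau$ under mere asymptotic stationarity; once that is in hand, the geometric summation with factor $(C+1-\kappa)/(1-\kappa)$ closes the argument unchanged, and the stationary filter law $\eta$ still exists by the Elton--Debaly--Truquet result (\cite{Elton1990a,DebalyTruquet2021a}) applied to the stationary initialisation.

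\textbf{The uniform one-step bound (main obstacle).} This is where the new work lies. Because the spectral radius of $\mathbf{F}$ is below one, the signal is exponentially ergodic: its law converges geometrically to $\mu_\infty=N(0,\mathbf{C}_\infty)$ with $\mathbf{C}_\infty$ solving $\mathbf{C}_\infty=\mathbf{F}\mathbf{C}_\infty\mathbf{F}^\top+\mathbf{Q}$, the observation marginals converge to the induced stationary law, and the filter covariance $\mathbf{C}_\tau$ converges exponentially to the stabilising Riccati solution. The decisive consequence is that $\mathbf{E}^\mu[\|X_\tau\|^2]$, $\mathbf{E}^\mu[\|Y_\tau\|^2]$ and $\mathbf{E}^\mu[\|S_\tau\|^2]$ are bounded uniformly in $\tau$ — precisely the property that fails for the unstable model flagged under Limitations, where the observation law escapes to infinity. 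Uniform second moments render the family of laws of $(S_{\tau-1}^{s_0},Y_\tau)$ tight, so I can fix one ball $R$ carrying mass at least $1-\delta$ for every $\tau$ simultaneously. I would then invoke \citet[Theorem 4.16]{KidgerLyons2020a} to build $\widehat\Psi$ with $\sup_{(s,y)\in R}\|\Psi(s,y)-\widehat\Psi(s,y)\|$ small while keeping its Lipschitz constant no larger than that of $\Psi$ (so (\ref{stabilityresult2}) survives), split the expectation over $R$ and $R^c$, and control the $R^c$ contribution via Cauchy--Schwarz using the affine, linear-growth form of the Kalman and Riccati maps together with the uniform moment bounds; taking $\delta$ small makes this tail negligible. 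This delivers the sought $\tau$-uniform one-step error and hence, through the unchanged geometric series, the time-uniform estimate (\ref{kalmanapproxresult}). The crux relative to Theorem \ref{theo:maintimeuniform} is exactly this off-$R$ tail estimate uniform in $\tau$; and for the stated uniformity over all $(\overline{X}_0,\mathbf{C}_0)\in\mathbb{S}$ one absorbs the (initial-condition-dependent) transient through the exponential filter stability of (\ref{stabilityresult1}) and the initialisation $\mathbf{s}_0=s_0$, exactly as in the main theorem.
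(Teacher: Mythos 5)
Your proposal is correct in substance but follows a genuinely different route from the paper. The paper never reopens the proof of Theorem \ref{theo:maintimeuniform}; it uses a reformulation trick instead: it stacks the Kalman mean recursion (\ref{kalman-proof-theory1}) with the signal model (\ref{sysLintheory}) to obtain the joint system (\ref{stackedsystem}), paired with the Riccati recursion (\ref{kalman-proof-theory2}), and treats \emph{that} as the filter to approximate. The input driving the stacked system is the i.i.d.\ pair $(W_t,V_t)$, so Assumption \ref{assump:stationary} holds exactly, and the block-triangular transition matrix inherits the spectral radius $<1$, so Assumption \ref{assump:stability} holds; Theorem \ref{theo:maintimeuniform} is then invoked as a black box. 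The trained network is finally modified by feeding $Y_t$ into the $W_t$ slot, dropping the $V_t$ input, and zeroing the feedback of the $\widehat{X}_t$ block, which turns the approximated map into exactly the Kalman recursion; the swap is justified by the uniform-in-$t$ bound $\mathbf{E}^\mu[\|Y_t-W_t\|^2]^{1/2}=\mathbf{E}^\mu[\|\mathbf{H}X_t\|^2]^{1/2}<\infty$, finite precisely because $\mathbf{F}$ is stable. You instead rework the interior of Theorem \ref{theo:maintimeuniform}'s proof, replacing exact stationarity with what stability of $\mathbf{F}$ actually supplies: uniformly bounded moments, tightness of the laws of $(S_{\tau-1},Y_\tau)$, a single compact set of mass $1-\delta$ for all $\tau$, Lipschitz-constrained sup-norm approximation there, and a Cauchy--Schwarz tail bound off it. Each route has merit: the paper's is shorter and cleanly isolates why stability of $\mathbf{F}$ is the operative hypothesis (it simultaneously stabilises the stacked dynamics and renders its input stationary), but its input-swap step is asserted rather than proved---an $L^2$ guarantee under the law of $(W_t,V_t)$ does not automatically transfer to the law of the substituted input $Y_t$, and making that step airtight needs essentially your tightness-plus-tail argument; your route is longer but proves a strictly more reusable statement, namely a version of Theorem \ref{theo:maintimeuniform} with Assumption \ref{assump:stationary} relaxed to uniform tightness/asymptotic stationarity, and it correctly pinpoints the moment-blow-up mechanism behind the unstable counterexample in the paper's Limitations discussion. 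One small tidy-up: once you bound the one-step error at the actual filter states uniformly in $\tau$, the stationary coupling in (\ref{singlestepdecomposition}) and the Elton--Debaly--Truquet step become redundant---the telescoping sum plus the inherited contraction (\ref{stabilityresult2}) already close the argument---so you should drop them rather than try to justify Elton's theorem under a non-stationary initialisation.
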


The observation sequence is neither Markov nor stationary in general, and the hypotheses of the corollary here do not call for these conditions. 

\begin{proof}
	Stacking (\ref{kalman-proof-theory1}) and \eqref{measLintheory} we may write, 
\begin{align}
	\left[\begin{array}{c} \overline{X}_{t} \\ X_t  \end{array}\right] \,&=\, \left[\begin{array}{cc} \mathbf{F} - \mathbf{K}_{t}^*\mathbf{H}\mathbf{F}  & \mathbf{K}_{t}^*\mathbf{H}\mathbf{F} \\ \mathbf{0} & \mathbf{F}  \end{array}\right] \left[\begin{array}{c} \overline{X}_{t-1} \\ X_{t-1}  \end{array}\right] \,\nonumber\\
	& \qquad +\, \left[\begin{array}{cc}  \mathbf{K}_{t}^*  & \mathbf{0} \\ \mathbf{0} & \mathbf{I}  \end{array}\right] \left[\begin{array}{c} {W}_{t} \\ V_t  \end{array}\right] \label{stackedsystem}
\end{align}
The transition matrix for the given stacked system has spectral radius $<1$ (its eigenvalues are the union of those of the diagonal blocks). We consider a \gls{RNN} approximation of (\ref{stackedsystem}) and (\ref{kalman-proof-theory2}), treating this pair as the filter of interest. Assumption \ref{assump:stability} holds for the pair (\ref{stackedsystem}), (\ref{kalman-proof-theory2}). The added noise process for the stacked system is stationary and thus Assumption \ref{assump:stationary} holds. Theorem \ref{theo:maintimeuniform} can now be applied. That is, there is a \gls{RNN} that may approximate the system (\ref{stackedsystem}), (\ref{kalman-proof-theory2}) with $(W_t,V_t)$ as an input. Suppose that we substitute $Y_t$ for the input $W_t$, drop the input $V_t$, and set to zero the affine feedback map of $\widehat{X}_t$. The result is a sub-network approximation of the desired (\ref{kalman-proof-theory1}), (\ref{kalman-proof-theory2}). By enforcing an accurate enough network approximation of (\ref{stackedsystem}), the result (\ref{kalmanapproxresult}) holds for the sub-network approximation of the desired (\ref{kalman-proof-theory1}) when swapping the input from $W_t$ to $Y_t$. This follows because,
\begin{align}
	\mathbf{E}^\mu[\|Y_t-W_t  \|^2]^{\frac{1}{2}} &\leq \mathbf{E}^\mu[\|\mathbf{H}\mathbf{F}^t\,X_0 + \mathbf{H}\textstyle{\sum_{k=1}^{t-1}} \mathbf{F}^kV_k  \|^2]^{\frac{1}{2}} \nonumber \\
		&\leq \|\mathbf{H}\|\|\mathbf{F}^t\|\,(\mathrm{trace}(\mathbf{C}_0) + \|\overline{X}_0\|^2)^{\frac{1}{2}} \nonumber \\
		& \qquad + \frac{\|\mathbf{H}\|\, \mathrm{trace}(\mathbf{Q})^{\frac{1}{2}}}{1 - \|\mathbf{F}\|}
\end{align}
is just a finite constant (and $\|\mathbf{F}^t\|$ even goes to zero).
\end{proof}

We remark that it is just the one model property (i.e. the spectral radius of $\mathbf{F}$ being $<1$) that ensures the observation sequence is asymptotically stationary, and that Assumption \ref{assump:stability} is satisfied. This supports the more general notion that exponential ergodicity of the latent signal is sufficient for Theorem \ref{theo:maintimeuniform} to hold, as noted just before this subsection. Note Assumption \ref{assump:stability} holds under just model detectability and stabilisability conditions \citep{AndersonMoore1979a} (which can hold when the signal is unstable; but which are implied when the spectral radius of $\mathbf{F}$ is $<1$).

We have mildly relaxed the Assumption \ref{assump:stationary} that $(Y_t)_{t\in\mathbb{N}}$ is stationary used in Theorem \ref{theo:maintimeuniform}. To see where things may go wrong, consider an identity approximation of a scalar observation $Y_t$ given by $\sigma(Y_t + b)-b$ for a sufficiently large scalar bias $b>0$. If $Y_t$ is stationary, then there is a $b$ such that $\mathbf{E}^\mu[\|Y_t - (\sigma(Y_t + b)-b)\|^2]\leq\varepsilon$ for any $\varepsilon>0$. However, suppose $X_t$ is unstable and $Y_t$ moves to the negative; then there is a time $\tau= \tau(b)\in\mathbb{N}$ such that with exponentially large probability we have $\sigma(Y_t + b)-b=-b$ for all $t>\tau$ . The approximation error for all $t>\tau$ grows on average and a time-uniform bound is impossible.

\section{EXAMPLE}

We consider a simple scalar example to illustrate the results, conditions, and some of the discussion points. Let,
\begin{align}
	X_t \,&=\, \alpha\,X_{t-1} + V_t,~\quad\nonumber\\
	Y_t \,&=\, X_t +\beta\, W_t
\end{align}
where $\mathbb{X}=\mathbb{R}$, $\alpha\in\{0.98,1.001\}$, $\beta\in\{1,2\}$ and $V_t$, $W_t$ are independent standard Gaussian white noises, independent of $X_0$ which is zero mean Gaussian with variance $25$. 

The Kalman filter $\smash{S^{(\overline{X}_{0},\mathbf{C}_{0})}_t}:=(\overline{X}_{t},\mathbf{C}_{t})$ is optimal and Assumption \ref{assump:stability} holds in all cases considered (by satisfaction of the classical detectability and stabilisability conditions). If $|\alpha|<1$, then $X_t$ is ergodic and Assumption \ref{assump:stationary} holds. Here, $|\alpha|<1$ is an explicit provable threshold on the satisfaction of Assumption \ref{assump:stationary}. We also have the ground-truth comparison in the optimal Kalman filter. We can thus study the requisite condition for time-uniform approximation (e.g. moving from $\alpha=0.98$ to just $\alpha=1.001$).

We train a \gls{RNN} based on (\ref{empiricalcostneural}) with $\rho(x)=x$, i.e. neglecting the variance. 
We use $N_{\mathrm{train}}$ samples over horizons of length $T_{\mathrm{train}}$. This training loss function is consistent with prior discussion noting that in practice one is unlikely to know the nature of sufficient statistics, nor desire them all as outputs, nor be willing to train on loss functions with all of them. We also consider a basic particle filter implementation \citep{GordonSalmondSmith1993a} with $1000$ particles. 

We compute $\smash{(\tfrac{1}{N_{\mathrm{test}}}\sum_{n=1}^{N_{\mathrm{test}}} ( \overline{X}^{(n)}_{t}- \widehat{X}^{(n)}_{t} )^2)^{\frac{1}{2}}}$ where $\smash{\widehat{X}^{(n)}_{t}}$ is the $n$-th mean state estimate of the particle or \gls{RNN} approximation from $N_{\mathrm{test}}=1000$ independent test examples. The test horizon is $T_{\mathrm{test}}=2000$ in all cases. 

In Fig \ref{fig:rmse1} ($\alpha=0.98$, $\beta\in\{1,2\}$) and Fig \ref{fig:rmseNN1} ($\alpha=1.001$, $\beta=2$) we plot the errors for different parameters. For each case we consider a \gls{RNN} in the form (\ref{RNNrecursiveform1}), (\ref{RNNrecursiveform2}), (\ref{RNNrecursiveform3}) with $L=3$ and $7$ \textsc{ReLU}s on both hidden layers. 

In each case in Fig \ref{fig:rmse1} we train with just $T_{\mathrm{train}}=20$, and $N_{\mathrm{train}}=5000$. With $\alpha=0.98$, Corollary \ref{corollarylinearKalman} holds, and the \gls{RNN}-based method outperforms the particle filter (which does not handle accurate observations $\beta=1$ well here). 

\begin{figure*}[!ht]
    \centering
    \includegraphics[width=0.875\textwidth]{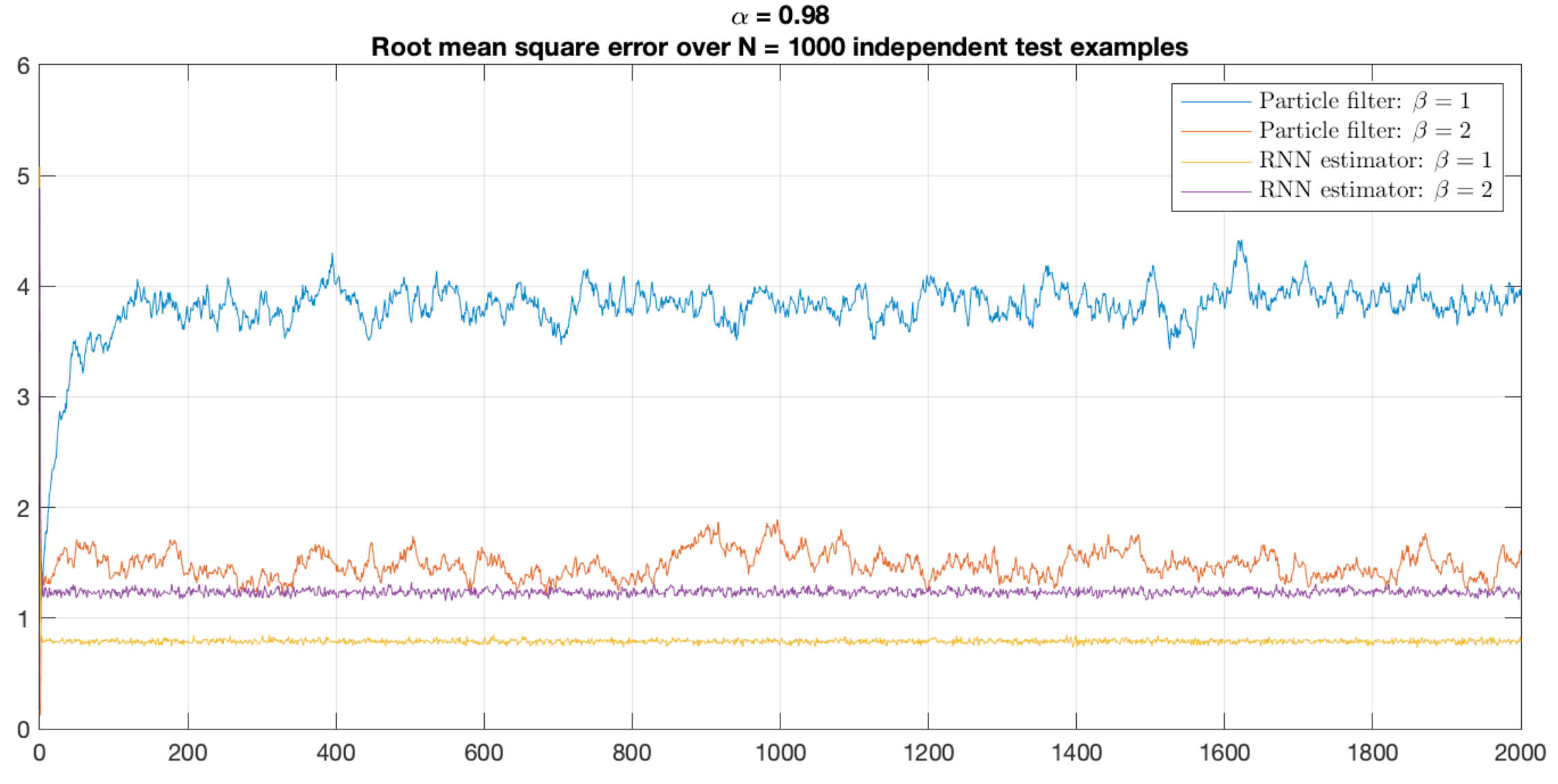}
        \caption{Root mean square errors.  }
    \label{fig:rmse1}
\end{figure*}

In Fig \ref{fig:rmseNN1} we consider $T_{\mathrm{train}}=20$, $N_{\mathrm{train}}=5000$ (as before), and also $T_{\mathrm{train}}=200$, $T_{\mathrm{train}}=1000$, $T_{\mathrm{train}}=2000$ (in the latter two cases we reduce $N_{\mathrm{train}}=1000$).

\begin{figure*}[!ht]
    \centering
     \includegraphics[width=0.875\textwidth]{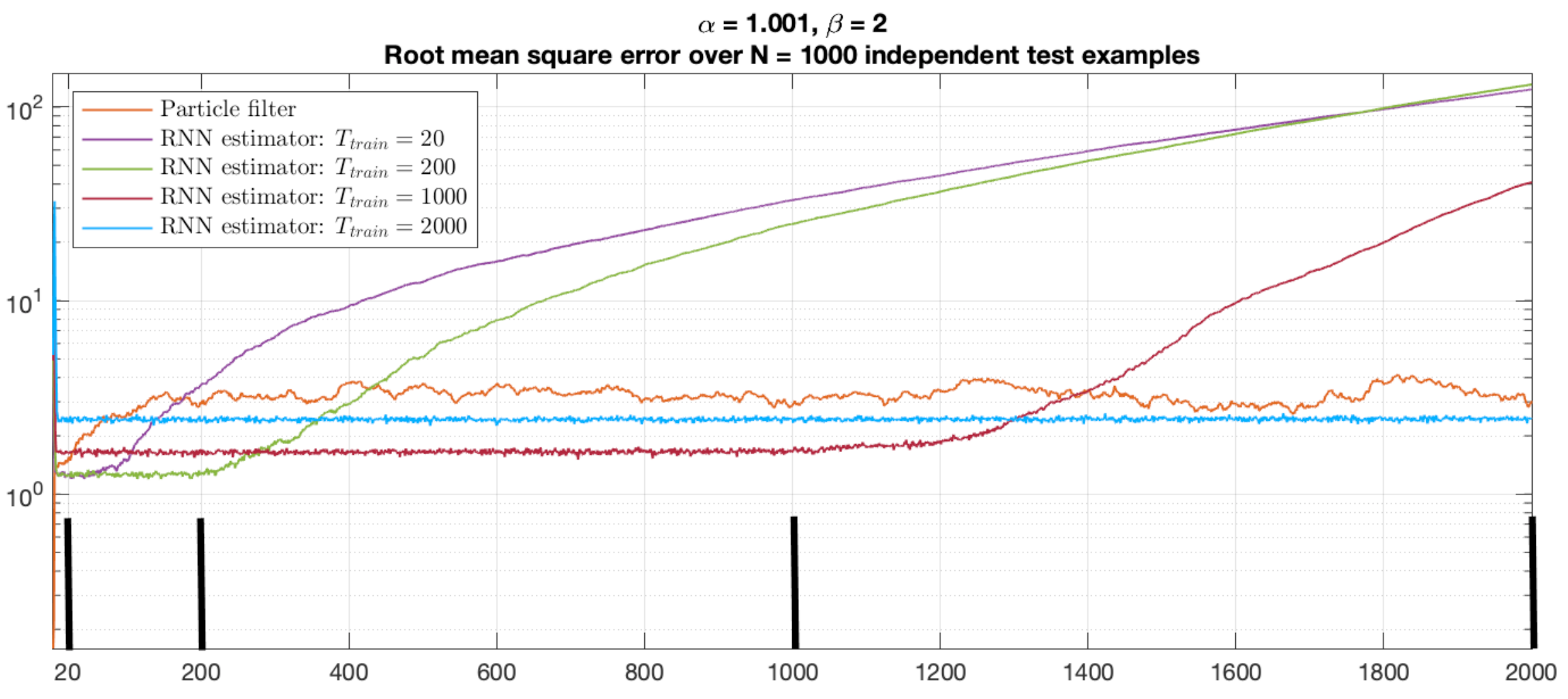}
    \caption{Root mean square errors (log vertical scale). }
    \label{fig:rmseNN1}
\end{figure*}

These examples reflect on a number of discussion points: 1) training on an output/cost function of interest, but of lower dimension than the sufficient statistic required to define the optimal filter; 2) when the conditions of Theorem \ref{theo:maintimeuniform}/Corollary \ref{corollarylinearKalman} are met, we may train on finite-length (often very short, maybe one or a handful of steps, depending on the ergodicity) but apply the filter indefinitely at test time with no unbounded accumulated growth of error; 3) differing capabilities and particulars of differing approximation schemes (e.g. particle vs neural in this case); 4) if either assumption is unmet, we may learn networks that work well on the length of the training data, e.g. as in Theorem \ref{theorem:basicapprox}, but eventually the error may start to accumulate.

\section{CONCLUSION}

We consider a generic recurrent neural network framework that approximates directly a recursive mapping from observational inputs to some desired Bayesian filter statistics. The focus of this article is the approximation capability of this framework. We provide approximation error bounds for filtering in general non-compact domains. The main result of this work is a couple of conditions on the underlying observation sequence and on the optimal filter that when satisfied allow one to approximate the optimal Bayesian filter to any desired accuracy over an indefinite or infinite time horizon. When applicable, this strong, time-uniform approximation result ensures good long-time filtering performance in practice. We discuss and illustrate a number of practical concerns and implications of these results; and we contrast the fixed horizon and time-uniform results with each other. With respect to the time-uniform universal approximation result, we explore the mechanisms by which the required conditions manifest in practice, their necessity, and we discuss their appearance in similar results based on different filtering approximation schemes.

\end{document}